%% file: main.tex
\documentclass[smallcaptions]{dccpaper}
\usepackage{amssymb,amsthm,booktabs,graphicx,microtype}
\usepackage[hidelinks]{hyperref}
\hypersetup{pdftitle={RangeRectQuant: Compressing Scientific Fields for Every Future Rectangular Sum},pdfauthor={Jiahao Zhang, Xinling Wen, Yu Chen}}
\usepackage{cite,needspace,placeins}

\newtheorem{theorem}{Theorem}
\newtheorem{lemma}{Lemma}
\newtheorem{proposition}{Proposition}
\newcommand{\Z}{\mathbb Z}
\newcommand{\RRQ}{RangeRectQuant}
\begin{document}
\title{\textbf{RangeRectQuant: Compressing Scientific Fields\newline for Every Future Rectangular Sum}}
\author{Jiahao Zhang, Xinling Wen, Yu Chen$^{*}$\\
Zhengzhou University of Aeronautics\\
$^{*}$Corresponding author: \texttt{chenyu@zua.edu.cn}}
\maketitle
\begin{abstract}
\input{sections/abstract}
\end{abstract}
\input{sections/introduction}
\input{sections/related_work}
\input{sections/method}
\input{sections/evaluation}
\input{sections/conclusion}
\Section{References}
\bibliographystyle{IEEEbib}
\bibliography{references}
\end{document}

%% file: sections/abstract.tex
Scientific rasters are often stored before the regions to be measured are known. Independent pixel-error bounds can force exact integer reconstruction when one absolute error budget covers every future rectangle. Coordinated changes allow every pixel to change while every rectangular sum remains accurate. A prefix sum accumulates values from the top-left corner. Four such sums determine any rectangular total. RangeRectQuant preserves the pixel range and designated exact samples while controlling errors in prefix sums. At each row position, we characterize exactly which errors let the remaining pixels meet these constraints. These errors form a single integer interval, even when many completions exist. A backward pass finds its endpoints; a forward pass uses them to choose quantized pixels. The resulting field meets the prescribed error budget for every axis-aligned rectangle, with linear work and workspace proportional to one row. Experiments on geographically held-out built-surface and water-occurrence maps show substantial storage savings over alternatives meeting the same requirements. Data producers encode once; later users measure arbitrary rectangles without the source, re-encoding, or query-specific correction.

%% file: sections/introduction.tex
\Section{Introduction}
Scientific rasters are archived before future measurement regions are known. Later users total values over neighborhoods, tiles, or other rectangles. Built-surface maps record building area per pixel. We seek one compressed field that keeps every rectangular sum within a prescribed error budget.

A natural approach gives each pixel the same independent error bound. Errors from many pixels can accumulate in the same direction. Certifying a small regional budget this way can force exact integer reconstruction (Proposition~\ref{prop:independent}). Coordinated errors allow every pixel to change: turning ones into alternating zeros and twos changes any rectangular sum by at most one (Figure~\ref{fig:coordination}).

A prefix sum adds values from the top-left corner to a pixel; four such sums determine any rectangular total~\cite{asano2003rounding,lyu2023sigmadelta}. Bounded prefix errors give an area-independent guarantee. Pixel ranges complicate coordination: zero cannot decrease, the maximum cannot increase, and a current choice can prevent the row from finishing legally.

\begin{figure}[htbp]
 \centering
 \includegraphics[width=\linewidth]{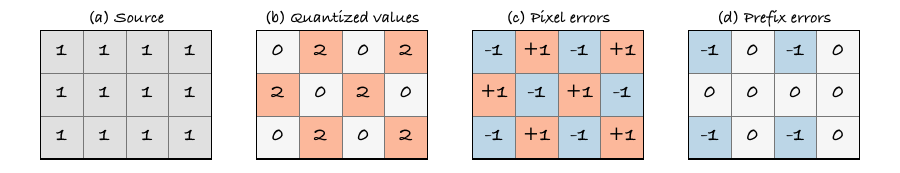}
 \caption{Coordination on a field of ones, with valid range zero to two. Every pixel changes; every rectangle error has magnitude at most one. Prefix errors stay between minus one and zero.}
 \label{fig:coordination}
\end{figure}

\RRQ\ looks ahead before committing a pixel (Figure~\ref{fig:overview}). All prefix-error states that can reach a legal row completion form a single integer interval. A backward pass finds its endpoints; a forward pass uses it to choose compressible pixels. Linear work guarantees every rectangle, including over 275 billion in a $1024\times1024$ field.

%% file: sections/related_work.tex
\Section{Related Work}
Pointwise distortion is a central target of scientific compression~\cite{di2025survey}. Prediction and interpolation methods improve storage rates while controlling individual reconstruction errors~\cite{sz32022,liu2022qoz,liu2024hpez,shi2024machete}. Quantization-error coding~\cite{lu2023adtfse} and integer representations for astronomical images~\cite{maireles2022astronomy} also improve rate--precision tradeoffs. A uniform pixel tolerance gives each sample the same freedom to change. Requiring every regional sum to share one small budget can remove that freedom entirely (Proposition~\ref{prop:independent}).

Specified downstream quantities bring compression closer to the analysis users perform~\cite{cappello2025practice}. These include derived measurements bounded through field norms~\cite{mgard2020,mgard2019qoi}, calculations preserved through tailored pixel tolerances~\cite{jiao2022qoi,liu2025qpet}, and selected ranges, regions, or topological features~\cite{liu2022constraints,yan2024toposz}. Query-aware representations optimize mean-squared range-query error~\cite{matias2006synopses} or predict bias and uncertainty~\cite{topiq2026}. Archived rasters pose a different requirement: the measurement regions are unknown at encoding time, and every later rectangle must satisfy a deterministic error budget.

Coordinated rounding supplies the closest structural foundation. Matrix rounding controls regional errors for two nested or disjoint region families~\cite{asano2003rounding}. Two-dimensional sigma--delta quantization controls errors in prefix sums~\cite{lyu2023sigmadelta}. Its stable greedy scheme uses a finite expanded alphabet whose values may leave the input range. Table~\ref{tab:capabilities} makes the combined requirement visible. \RRQ\ constructs one bounded integer field that supports every future rectangle, preserves designated exact samples, and needs no source at query time. It achieves this combination by characterizing exactly which cumulative errors remain realizable by valid pixels.

\begin{table}[htbp]
 \centering
 \caption{Guarantees of representative constructions. A checkmark denotes a supplied guarantee; a dash denotes no supplied guarantee. $^*$Uniform pixel bounds require exact reconstruction in the worst case with at least two allowed integers and more changeable pixels than the budget. Clipping and integer rounding do not inherit the original sum budget.}
 \label{tab:capabilities}
 \begin{tabular}{p{.32\linewidth}ccc}
 \toprule
 Construction & \shortstack{Regional\\sums} & \shortstack{Valid integer\\pixels} & \shortstack{Designated\\exact values}\\
 \midrule
 Uniform pixel bound & Exact only$^*$ & $\checkmark$ & $\checkmark$\\
 Stable sigma--delta & $\checkmark$ & --- & ---\\
 Clip and integer-round & --- & $\checkmark$ & ---\\
 \RRQ & $\checkmark$ & $\checkmark$ & $\checkmark$\\
 \bottomrule
 \end{tabular}
\end{table}

%% file: sections/method.tex
\Section{Constructing a Field with Accurate Regional Sums}
\SubSection{Problem definition}
Let $X$ be a field of $H$ rows and $W$ columns, with integer samples in $[0,U]$, where $U$ is a nonnegative integer. Indices start at one. A mask marks missing samples, represented by zero in both $X$ and the reconstruction $Y$. The protected set $P$ includes these samples and any designated exact values. For a finite nonnegative budget $\varepsilon$, we require
\begin{equation}
 Y_{ij}\in\{0,\ldots,U\},\qquad Y_{ij}=X_{ij}\ ((i,j)\in P),\qquad
 \max_R\left|\sum_{(i,j)\in R}(Y_{ij}-X_{ij})\right|\leq\varepsilon,
 \label{eq:contract}
\end{equation}
where $R$ ranges over all nonempty axis-aligned rectangles. Exact transmission of the mask also preserves each region's valid-sample count. A mean over $k>0$ valid samples then has error at most $\varepsilon/k$.

Let $A$ count unprotected samples and $\delta\geq0$ be a finite uniform pixel tolerance.
\begin{proposition}[Independent pixel bounds]
For integer source and output values in $[0,U]$, the sharp worst-case rectangular error under independent pixel tolerance $\delta$ is $A\min(\lfloor\delta\rfloor,U)$. If $U\geq1$ and $\varepsilon<A$, these bounds certify the budget only with exact reconstruction.
\label{prop:independent}
\end{proposition}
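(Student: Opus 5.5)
We interpret ``sharp worst-case rectangular error under independent pixel tolerance $\delta$'' as the largest value of $\max_R|\sum_R(Y_{ij}-X_{ij})|$ over all admissible pairs $(X,Y)$: both integer in $[0,U]$, $Y=X$ on $P$, and $|Y_{ij}-X_{ij}|\leq\delta$ elsewhere. The plan is to establish a matching upper bound and lower-bound construction, then read off the certification consequence.

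\emph{Upper bound.} Both $X_{ij}$ and $Y_{ij}$ are integers in $[0,U]$, so $|Y_{ij}-X_{ij}|$ is an integer at most $\delta$, hence at most $\lfloor\delta\rfloor$. It is also at most $U$. Protected pixels contribute zero. By the triangle inequality, any rectangle $R$ has error at most the number of unprotected pixels in $R$ times $m:=\min(\lfloor\delta\rfloor,U)$, which is at most $Am$.

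\emph{Sharpness.} Take $R$ to be the whole field, which is a nonempty axis-aligned rectangle containing all $A$ unprotected pixels. Set $X_{ij}=0$ and $Y_{ij}=m$ at every unprotected pixel, and $X=Y=0$ on $P$. Protected pixels may hold arbitrary designated values; since they contribute nothing, any choice there works. All values are integers in $[0,U]$, every pixel change is $m\leq\delta$, and the total error is exactly $Am$. Hence the bound is attained. The one point needing care is that the worst case is taken over sources as well as outputs; for a fixed source the bound need not be attained. This is the intended reading, because the guarantee must be certified before the data is known.

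\emph{Certification consequence.} Suppose $U\geq1$ and $\varepsilon<A$, and suppose the tolerance certifies the budget for every admissible source and output, so that $Am\leq\varepsilon<A$. Since $A\geq1$, this forces $m<1$. Because $m$ is a nonnegative integer, $m=0$, and since $U\geq1$ this means $\lfloor\delta\rfloor=0$. An integer change of magnitude at most $\delta<1$ must be zero, so $Y=X$ at every pixel. The reconstruction is therefore exact.

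The main obstacle is keeping the quantifiers straight in the sharpness and certification steps, in particular checking that the extremal construction stays inside $[0,U]$ and respects $P$. Beyond that the argument is routine.
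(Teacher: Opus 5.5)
Your proof is correct and follows the paper's argument. The paper likewise bounds each integer change by $\min(\lfloor\delta\rfloor,U)$, attains $A\min(\lfloor\delta\rfloor,U)$ with a zero source and constant output over the whole field, and concludes from $A\min(\lfloor\delta\rfloor,U)\leq\varepsilon<A$ that this integer is zero. Your additions only make its implicit steps explicit: the triangle inequality, protected samples contributing nothing, the worst case ranging over sources, and $A\geq1$, which follows from $0\leq\varepsilon<A$.
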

\begin{proof}
Set $d=\min(\lfloor\delta\rfloor,U)$. Each change has magnitude at most $d$. A zero source and output $d$ at every unprotected sample attain $Ad$ over the whole field. For $\varepsilon<A$, integer $d$ must be zero.
\end{proof}

Define the prefix error $E_{ij}$ as the sum of $Y-X$ over rows $1$ through $i$ and columns $1$ through $j$. Set $E_{0j}=E_{i0}=0$, and let $B=\lfloor\varepsilon/4\rfloor$. For $R=[a,b]\times[c,d]$,
\begin{equation}
 \sum_R(Y-X)=E_{bd}-E_{a-1,d}-E_{b,c-1}+E_{a-1,c-1}.
 \label{eq:corners}
\end{equation}
\Needspace{3\baselineskip}
Thus $|E_{ij}|\leq B$ implies the regional budget in~\eqref{eq:contract}, independently of region size. Our construction keeps this bound while selecting values of $Y$ directly.

Figure~\ref{fig:coordination} shows how coordination changes the possibilities. For $X_{ij}=1$, $U=2$, and $P=\varnothing$, set $Y_{ij}=1-(-1)^{i+j}$. Every pixel changes. Each rectangle error factors into two alternating sums, each of magnitude at most one. For $HW>4$, independent tolerance at least one permits an all-twos field with error $HW$, so certifying budget four requires exact pixels.

\begin{figure}[htbp]
 \centering
 \includegraphics[width=\linewidth]{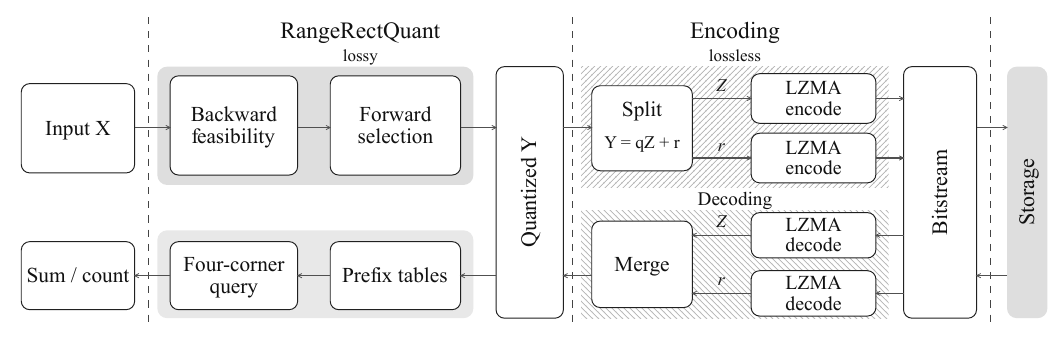}
 \caption{Overview of \RRQ: input $X$ becomes quantized field $Y$. For integer step $q>0$, quotient $Z=\lfloor Y/q\rfloor$ and remainder $r=Y-qZ$ use lossless Lempel--Ziv--Markov chain algorithm (LZMA) coding. Decoding restores $Y$; prefix tables serve sums and valid counts.}
 \label{fig:overview}
\end{figure}

\SubSection{A row transition that includes the value range}
The row becomes a path through prefix errors. The preceding row supplies incoming errors; each new state determines a pixel and must respect its range.

In row $i$, the source is $x_j=X_{ij}$ and the incoming error is $u_j=E_{i-1,j}$, with $u_0=0$. The previous state is $a=E_{i,j-1}$; the new state is $b=E_{ij}$. Incoming errors change by $\delta_j=u_j-u_{j-1}$. Setting $t_j=x_j-\delta_j$ gives reconstructed value $y_j=t_j+b-a$.
The prefix bound requires $b\in[-B,B]$, the range requires $b\in[a-t_j,a+U-t_j]$, and a protected pixel forces $b=a+\delta_j$.

We favor values divisible by a positive integer step $q$, called \emph{coarse pixels}. A \emph{fine pixel} retains a nonzero remainder modulo $q$. This transition expresses the coarse condition as
\begin{equation}
 b\equiv a-t_j\pmod q.
 \label{eq:coarse}
\end{equation}
Copying the source row gives a feasible path: $a=u_{j-1}$ and $b=u_j$ produce $y_j=x_j$.

\SubSection{Exact intervals for completing a row}
The remaining row may have many legal completions. All prefix-error states that can reach one form a single integer interval.

Only its two endpoints are needed to describe every feasible starting state. Computing them backward lets each forward choice retain a way to finish the row.

Let $V_j$ contain the integer states after column $j$ that allow the remaining row to be completed. Write $\Z$ for the integers and $P_i=\{j:(i,j)\in P\}$. Starting with $V_W=[-B,B]\cap\Z$, each interval $V_j=[L_j,H_j]\cap\Z$ has predecessor
\begin{equation}
 V_{j-1}=
 \begin{cases}
 [-B,B]\cap[L_j+t_j-U,H_j+t_j]\cap\Z, & j\notin P_i,\\
 [-B,B]\cap[L_j-\delta_j,H_j-\delta_j]\cap\Z, & j\in P_i.
 \end{cases}
 \label{eq:backward}
\end{equation}
A backward pass computes these endpoints. During the forward pass, the unprotected pixel's allowed states are
\begin{equation}
 I_j=V_j\cap[a-t_j,a+U-t_j].
 \label{eq:forward}
\end{equation}
If $I_j$ contains a state satisfying~\eqref{eq:coarse}, choose the one closest to zero, breaking ties toward the smaller state. Otherwise choose the integer in $I_j$ closest to zero. A protected pixel takes its forced state. Update $a$ after each pixel and $u$ after each row.

\begin{lemma}[Exact remaining-row realizability]
For preceding-row states $u_j\in[-B,B]$, the sets in~\eqref{eq:backward} are exactly those admitting a legal completion. Each is an integer interval, and $0\in V_0$.
\end{lemma}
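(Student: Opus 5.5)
We argue by backward induction on $j$, from $j=W$ down to $j=0$, proving three claims simultaneously. First, $V_j$ is exactly the set of integer states $a\in[-B,B]$ after column $j$ from which columns $j+1,\dots,W$ can be completed legally. Second, $V_j$ is an integer interval. Third, $V_j$ contains $u_j$. The third claim is the strengthening that makes nonemptiness automatic, and it gives $0\in V_0$ because $u_0=0$.

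Here a legal completion from state $a$ means a choice of states $E_{ij}=b_j$ for the remaining columns with three properties:
\begin{itemize}
\item every $b_k\in[-B,B]$;
\item every unprotected $y_k=t_k+b_k-b_{k-1}$ lies in $[0,U]$;
\item every protected column has $b_k=b_{k-1}+\delta_k$.
\end{itemize}
The coarse condition~\eqref{eq:coarse} is a preference only, so it plays no role in realizability. At $j=W$ nothing remains, so the realizable set is $[-B,B]\cap\Z$, which equals $V_W$. This set contains $u_W$ because $u_W\in[-B,B]$ by hypothesis.

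For the inductive step, assume the claims hold for $V_j=[L_j,H_j]\cap\Z$ and consider a state $a\in[-B,B]$ after column $j-1$. The state $a$ is completable exactly when some $b\in V_j$ is a legal successor of $a$.

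\emph{Unprotected column.} The successor $b$ is legal iff $0\le t_j+b-a\le U$, that is, iff $a\in[b+t_j-U,\,b+t_j]$. The set of completable $a$ is therefore the union over $b\in V_j$ of these windows, intersected with $[-B,B]$. All windows have the same length $U\ge0$, and consecutive windows are shifts by one. Since $V_j$ is a nonempty set of consecutive integers, the windows overlap or abut, and their union is the single interval $[L_j+t_j-U,\,H_j+t_j]\cap\Z$. Intersecting with $[-B,B]$ gives the first case of~\eqref{eq:backward}, and an intersection of integer intervals is an integer interval.

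\emph{Protected column.} The only successor is $b=a+\delta_j$, so $a$ is completable iff $a+\delta_j\in V_j$. This gives the translate $[L_j-\delta_j,H_j-\delta_j]$, intersected with $[-B,B]$, which is the second case of~\eqref{eq:backward}.

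\emph{Membership of $u_{j-1}$.} The copying path from Section~2 shows that $u_{j-1}\in V_{j-1}$. Take $a=u_{j-1}\in[-B,B]$ and $b=u_j\in V_j$. Then $y_j=t_j+\delta_j=x_j\in[0,U]$, so this step is legal in the unprotected case. It also satisfies the protected equation $b=a+\delta_j$. Hence $V_{j-1}\ni u_{j-1}$, and in particular $V_{j-1}$ is nonempty.

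The main obstacle is keeping the union step sound. Contiguity of the union requires $V_j$ to be nonempty, since an empty $V_j$ would make the endpoint formula meaningless. It also requires each window to be nonempty, which holds because $U\ge 0$. Carrying the invariant $u_j\in V_j$ through the induction removes the nonemptiness issue, so I would state it explicitly in the inductive hypothesis. Finally, because each $V_j$ is proved to be exactly the realizable set, it contains every state that admits a completion, so it is not merely a sufficient set.
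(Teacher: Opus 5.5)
Your proposal is correct and takes the same approach as the paper: backward induction, with the union of unit-shifted windows $[b+t_j-U,\,b+t_j]$ for unprotected columns, a translation for protected ones, and the source-copying path to get $u_j\in V_j$ and hence $0=u_0\in V_0$. The one difference is that you carry $u_j\in V_j$ inside the inductive hypothesis. This makes explicit the nonemptiness that the paper's union-over-consecutive-$b$ step silently relies on.
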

\begin{proof}
An unprotected successor $b$ permits predecessors in $[b+t_j-U,b+t_j]\cap\Z$. Their union over consecutive $b\in V_j$ gives~\eqref{eq:backward}, including $U=0$. Protected transitions translate intervals. Backward induction gives exactness. The legal source suffix supplies $u_j\in V_j$, including $0=u_0\in V_0$.
\end{proof}

\begin{theorem}[Linear-time realization of accurate rectangular sums]
For every bounded integer input, protected set, integer $q\geq1$, and finite $\varepsilon\geq0$, the backward--forward construction terminates and satisfies~\eqref{eq:contract}. It takes $O(HW)$ arithmetic operations and $O(W)$ additional row workspace.
\label{thm:construction}
\end{theorem}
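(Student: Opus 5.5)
The proof goes by induction over rows, maintaining the invariant that the incoming errors satisfy $u_j=E_{i-1,j}\in[-B,B]$ for all $j$. The base case is row $0$, where all these errors are zero. Given the invariant for row $i$, the Lemma (exact remaining-row realizability) applies. It says every $V_j$ is a nonempty integer interval, that $0\in V_0$, and that $V_j$ is exactly the set of states after column $j$ from which the row can be finished legally.

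Inside the row, a second induction runs over columns with the invariant that the current state $a=E_{i,j-1}$ lies in $V_{j-1}$. It starts with $a=0\in V_0$.

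\emph{Protected pixel.} The forced state is $b=a+\delta_j$. By~\eqref{eq:backward}, $a\in V_{j-1}$ gives $b\in[L_j,H_j]$, so $b\in V_j$.

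\emph{Unprotected pixel.} By exactness, $a\in V_{j-1}$ means some $b\in V_j$ satisfies $a-t_j\le b\le a+U-t_j$. Hence $I_j$ in~\eqref{eq:forward} is nonempty. If some element of $I_j$ satisfies~\eqref{eq:coarse}, the coarse rule picks one; otherwise the closest-to-zero rule picks an element of $I_j$. Either way the choice is well defined, since $I_j$ is a finite nonempty integer interval and ties are broken toward the smaller state. The chosen $b$ lies in $V_j$.

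At the end of the row we have $E_{ij}\in V_j\subseteq[-B,B]$ for all $j$, which re-establishes the row invariant for row $i+1$.

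Next the invariants must be turned into the contract~\eqref{eq:contract}. The value $y_j=t_j+b-a$ is an integer because all states and inputs are integers. The range condition $b\in[a-t_j,a+U-t_j]$ is exactly $0\le y_j\le U$. On protected pixels, $b=a+\delta_j$ gives $y_j=x_j$, so $Y=X$ on $P$; in particular masked zeros remain zeros.

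We also need that the states $b$ are truly the prefix errors of $Y$. I would verify this from the definition $y_j-x_j=(b-a)-\delta_j$, which expands to
\[E_{ij}-E_{i,j-1}-E_{i-1,j}+E_{i-1,j-1}=Y_{ij}-X_{ij},\]
the standard two-dimensional difference identity. With that, every $|E_{ij}|\le B=\lfloor\varepsilon/4\rfloor$. By~\eqref{eq:corners}, each rectangle error is a signed sum of four such terms, so it is at most $4B\le\varepsilon$.

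For complexity, each row needs one backward pass of $W$ constant-time interval updates, storing only $L_j,H_j$ ($O(W)$ memory). The forward pass then does $W$ steps. Each step must decide in $O(1)$ whether $I_j=[l,h]$ contains an element of a residue class mod $q$, and if so which one is closest to zero. The plan for this is to compute the target residue $\rho=(a-t_j)\bmod q$ and the smallest candidate $\ge\max(l,\min(0,h))$ in that class. Any solution closest to zero lies among the nearest class members to $0$ clipped to $[l,h]$, which means checking at most two candidates on each side of zero. Finally, the arrays $u$ and $\delta$ take $O(W)$ space, giving $O(HW)$ operations overall. Termination is immediate because the loops are finite and no step ever fails.

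The step I expect to need the most care is showing that the forward choice never leaves $V_j$ and that $I_j$ is nonempty. This rests entirely on the exactness (not merely soundness) of the backward sets in the Lemma. A secondary point to watch is that the residue computation must be $O(1)$ rather than a scan over $I_j$, which could have length up to $2B+1$.
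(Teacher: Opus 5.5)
Your proposal is correct and follows the paper's own route. Exactness of the backward intervals supplies a successor from every $a\in V_{j-1}$, each transition enforces range and protection, induction over columns and rows keeps $|E_{ij}|\le B$ so the corner identity~\eqref{eq:corners} yields~\eqref{eq:contract}, and two linear scans per row with constant-time interval and residue operations give $O(HW)$ time and $O(W)$ workspace. You also fill in details the paper leaves implicit: the two-dimensional difference identity showing the constructed states are the true prefix errors, and the clamp-based $O(1)$ search for the coarse state closest to zero.
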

\begin{proof}
Exact feasibility supplies a successor from each $a\in V_{j-1}$. Every transition enforces range and protection. Induction gives $|E_{ij}|\leq B$ and~\eqref{eq:contract}. Two scans per row use constant-work interval and residue operations with row-length arrays.
\end{proof}

The $H(H+1)W(W+1)/4=\Theta(H^2W^2)$ possible rectangles share this guarantee after $O(HW)$ arithmetic operations. Workspace excludes input, output, and verification tables.

\begin{lemma}[Feasible intervals determine fine precision]
Write $I_j=[l,h]\cap\Z$ at an unprotected forward choice. Its feasible pixel interval is $J_j=[s,v]\cap\Z$, where $s=t_j+l-a$ and $v=t_j+h-a$. It contains $\kappa_j=\lfloor v/q\rfloor-\lfloor(s-1)/q\rfloor$ coarse values. Let $m$ count fine pixels in the field, and $m_P$ count protected source values with nonzero remainder. Then
\begin{equation}
 m=m_P+\sum_{(i,j)\notin P}\mathbf{1}\{\kappa_j=0\}
 \leq m_P+\sum_{(i,j)\notin P}\mathbf{1}\{|I_j|<q\}.
 \label{eq:fine-count}
\end{equation}
\end{lemma}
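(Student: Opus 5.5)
The plan is to argue pixel by pixel. First I translate the state interval $I_j$ into the set of admissible pixel values. Then I count the coarse values in that set. Finally I use the forward rule's preference for coarse states to connect fine pixels with $\kappa_j=0$.

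\emph{Step 1: from states to pixels.} At an unprotected forward choice, the map $b\mapsto y_j=t_j+b-a$ is a translation. It therefore sends $I_j=[l,h]\cap\Z$ bijectively onto $[t_j+l-a,\,t_j+h-a]\cap\Z=J_j$. By~\eqref{eq:forward}, $I_j\subseteq[a-t_j,a+U-t_j]$, so $J_j\subseteq[0,U]$. The lemma on exact remaining-row realizability, used inside Theorem~\ref{thm:construction}, guarantees $I_j\neq\varnothing$, so $s\le v$. The coarse condition~\eqref{eq:coarse} is equivalent to $q\mid y_j$. Hence the coarse states in $I_j$ correspond exactly to the multiples of $q$ in $J_j$. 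The standard count of multiples of $q$ in $[s,v]$ is $\lfloor v/q\rfloor-\lfloor (s-1)/q\rfloor$, and this holds for $s=0$ as well because floors handle negative arguments. This gives the formula for $\kappa_j$.

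\emph{Step 2: the equality in~\eqref{eq:fine-count}.} I split the pixels into protected and unprotected ones.

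A protected pixel takes its forced state $b=a+\delta_j$, so $y_j=x_j=X_{ij}$. It is fine exactly when the source value has a nonzero remainder, and these pixels together contribute $m_P$.

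An unprotected pixel falls into one of two cases. If $\kappa_j>0$, the forward rule selects a state satisfying~\eqref{eq:coarse}, so $y_j$ is a multiple of $q$ and the pixel is coarse. If $\kappa_j=0$, no element of $J_j$ is divisible by $q$, so whatever is chosen is fine. Summing the two cases gives the equality.

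\emph{Step 3: the inequality.} It suffices to show that $\kappa_j=0$ implies $|I_j|<q$. Any $q$ consecutive integers contain a multiple of $q$. So if $|J_j|=|I_j|\ge q$, then $\kappa_j\ge1$. The contrapositive gives termwise domination of the indicators, and summing yields the bound.

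The main subtlety, and really the only one, is Step 1's bookkeeping. I must confirm that $I_j$ is nonempty, so that $[l,h]$ is well defined and the counting formula applies. I must also confirm that the residue condition on states matches divisibility of $y_j$ under the exact same translation. Both follow from the definitions $y_j=t_j+b-a$ and~\eqref{eq:coarse}, together with feasibility from the earlier lemma. The remaining steps are routine counting.
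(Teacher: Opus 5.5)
Your proposal is correct and follows the paper's own argument. You translate $I_j$ to $J_j$ so that coarse states become multiples of $q$, use the forward rule's preference for coarse states to get the equality, and invoke the fact that any $q$ consecutive integers contain a multiple of $q$ for the inequality; your extra bookkeeping (nonemptiness of $I_j$ via $a\in V_{j-1}$, the $s=0$ floor case, and protected pixels reproducing their source values) only spells out what the paper's three-sentence proof leaves implicit.
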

\begin{proof}
The formula counts multiples of $q$ in $J_j$, and the forward rule selects one whenever present. Any $q$ consecutive integers contain a multiple. Protected outputs equal their source values.
\end{proof}
Figure~\ref{fig:spatial} maps these intervals and fine pixels in Cairo, with prefix errors bounded throughout.

\SubSection{Encoding the constructed field}
Write $Y=qZ+r$, where $Z=\lfloor Y/q\rfloor$ and $0\leq r<q$. LZMA codes both 16-bit arrays after grouping low and high bytes. Files also store masks and metadata.

The intervals determine which pixels can use coarse values. Fine positions and remainders then contribute to encoding length. Set $N=HW$ and let $K=\lfloor U/q\rfloor+1$ count the possible quotient values; $m$ is the fine count in~\eqref{eq:fine-count}.
\begin{proposition}[Geometry-dependent representation length]
For $q\geq2$ and known $N,U,q$, a lossless representation of $Y$ requires at most
\begin{equation}
 \lceil N\log_2K\rceil+\lceil\log_2(N+1)\rceil
 +\left\lceil\log_2\binom{N}{m}\right\rceil
 +\lceil m\log_2(q-1)\rceil
 \label{eq:finite-code}
\end{equation}
bits, plus masks and metadata. For $q=1$, $m=0$ and the bound is $\lceil N\log_2(U+1)\rceil$.
\end{proposition}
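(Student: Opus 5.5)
I will prove the bound by writing down an explicit prefix-free code for $Y$ with four concatenated parts, one for each term in~\eqref{eq:finite-code}. The decoder knows $N$, $U$ and $q$.

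\emph{First part: quotients.} Every valid pixel has $Y_{ij}\in\{0,\ldots,U\}$, so its quotient $Z_{ij}=\lfloor Y_{ij}/q\rfloor$ lies in $\{0,\ldots,\lfloor U/q\rfloor\}$, a set of $K$ values. I will read the quotient array as one integer in mixed radix $K$, which lies in $[0,K^N)$. Writing it in fixed width takes $\lceil N\log_2K\rceil$ bits. Coding the array as a single integer, rather than pixel by pixel, is what avoids a separate ceiling for each pixel.

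\emph{Second part: the fine count.} The fine count $m$ lies in $\{0,\ldots,N\}$. A fixed-width field of $\lceil\log_2(N+1)\rceil$ bits therefore encodes it.

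\emph{Third part: fine positions.} Once $m$ is known, the set of fine positions is one of $\binom{N}{m}$ subsets of the $N$ indices. Its rank in lexicographic (combinatorial number system) order takes $\lceil\log_2\binom{N}{m}\rceil$ bits.

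\emph{Fourth part: remainders.} At each fine position, $r_{ij}\in\{1,\ldots,q-1\}$, which is a nonzero remainder and so takes $q-1$ values. I will code these $m$ remainders as one mixed-radix integer below $(q-1)^m$, using $\lceil m\log_2(q-1)\rceil$ bits. When $q=2$ this is zero bits, which is consistent because the only nonzero remainder is $1$. At every coarse position the remainder is $0$ by definition, so it needs no bits.

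Decoding runs in order. The decoder reads $Z$ from the first part, then $m$, then the position set, then the remainders, and reconstructs $Y=qZ+r$. This shows the code is lossless, and its total length is the sum in~\eqref{eq:finite-code}. Masks and metadata are stored separately, as the statement allows.

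For $q=1$, every integer is a multiple of $q$, so there are no fine pixels and $m=0$. Also $K=U+1$. In this case I will send only the first part, giving $\lceil N\log_2(U+1)\rceil$ bits. The count and position fields are dropped because $m=0$ is forced; this is why the $q\geq2$ formula is not applied to $q=1$.

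The only step I expect to need care is justifying the ceilings without extra overhead. A fixed-width binary encoding of an integer in $[0,M)$ uses exactly $\lceil\log_2M\rceil$ bits. The parts can be concatenated without separators for two reasons. The widths of the first two parts depend only on the known $N$, $U$ and $q$. The widths of the last two parts depend only on $m$, which the decoder has already read. Hence the stream parses uniquely.
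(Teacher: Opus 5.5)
Your proof is correct and follows essentially the same route as the paper: both encode the quotient array, the fine count, the fine-position set, and the nonzero remainders as ranks among $K^N$, $N+1$, $\binom{N}{m}$, and $(q-1)^m$ possibilities, each in a fixed-width field of ceiling-logarithm length, and both keep only the quotient at $q=1$. Your remarks on joint mixed-radix coding and on why the concatenated fields parse uniquely make explicit what the paper's short proof leaves implicit.
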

\begin{proof}
Encode quotient, count, fine positions, and remainders by ranks among $K^N$, $N+1$, $\binom Nm$, and $(q-1)^m$ possibilities. Each rank uses the ceiling of its set-size logarithm. Decoding forms $Y=qZ+r$; at $q=1$, only the quotient remains.
\end{proof}
Coarse alphabets reduce quotient cost; fine pixels add positions and remainders.

\Needspace{6\baselineskip}
The band $B$ sets accuracy; $q$ controls storage. For finite nonempty step set $\mathcal Q$, let $L_q(X)$ be the encoded file length at $q$. Select
\begin{equation}
 q^*=\operatorname*{arg\,min}_{q\in\mathcal Q}L_q(X)
 \label{eq:step-selection}
\end{equation}
All candidates retain Theorem~\ref{thm:construction}'s guarantee. Ties favor smaller $q$. Each trial repeats construction and coding.

\begin{figure}[htbp]
 \centering
 \includegraphics[width=\linewidth]{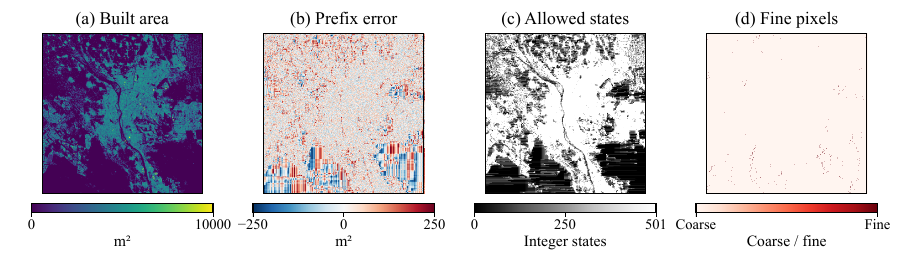}
 \caption{A $512^2$ Cairo built-area field with 100\,m pixels~\cite{ghsl2023}, at budget $\varepsilon=1000$\,m$^2$ and step $q=200$\,m$^2$. (a) Source values. (b) Prefix error bounded by $\pm250$\,m$^2$. (c) Allowed prefix-error states per pixel; darker regions offer fewer choices. (d) Fine pixels retaining nonzero remainders. All panels share the same grid.}
 \label{fig:spatial}
\end{figure}

%% file: sections/evaluation.tex
\Section{Evaluation}
\SubSection{Data and file comparisons}
We compare self-contained files of two bounded integer rasters: built surface measures building area per pixel; water occurrence records how often water is observed.

Four locations are used to choose the fixed encoding configuration. Four separate cities---Cairo, Delhi, New York, and Jakarta---illustrate the construction and step tradeoff. The main built-area rate study uses forty additional locations in six continental groups, selected from metadata before pixel access. Native $1024^2$ windows supply central $512^2$ and $256^2$ crops. Water uses an independent set of eight metadata-selected $512^2$ windows.

Built area uses the 2020 Global Human Settlement Layer: integers in $[0,10000]$\,m$^2$ on a native 100\,m grid~\cite{ghsl2023}. Water occurrence uses the 1984--2024 product~\cite{pekel2016water,gsw2026}: values in $[0,100]$, valid zero, and missing code 255. Both retain missing samples. Figure~\ref{fig:frontier} shows eight sum budgets per quantity: 25--5000\,m$^2$ for built area and 5--1000 percentage points for water. The fixed step is $q=\max(1,\lfloor\varepsilon/5\rfloor)$, capped at 100 for water.

A \emph{valid self-contained file} decodes without the source, preserves the missing mask and protected samples, and meets the integer range and rectangular error budget in~\eqref{eq:contract}. File sizes include masks, configuration, geospatial metadata, digest, and checksum. We compare six lossless integer representations, lossless JPEG\,2000~\cite{marcellin2000jpeg}, and SZ3, MGARD, and QPET~\cite{sz32022,mgard2020,liu2025qpet}. Water adds native-byte LZMA and Portable Network Graphics (PNG) files.

We sweep representative supported configurations of SZ3, MGARD, and QPET. Each decoded field is converted to the required bounded-integer representation and evaluated under the same regional-sum requirement. Files are decoded without access to the source.

The fixed-margin control changes $2\times2$ blocks by $(d,-d;-d,d)$, preserving row and column sums at nine integer bounds. A \emph{sum-pyramid control} splits square totals among four children within valid ranges. Its files store pixels or deviations of child totals from proportional parent shares.

\Needspace{12\baselineskip}
\SubSection{Regional accuracy and valid pixels}
\begin{figure}[htbp]
 \centering
 \includegraphics[width=\linewidth]{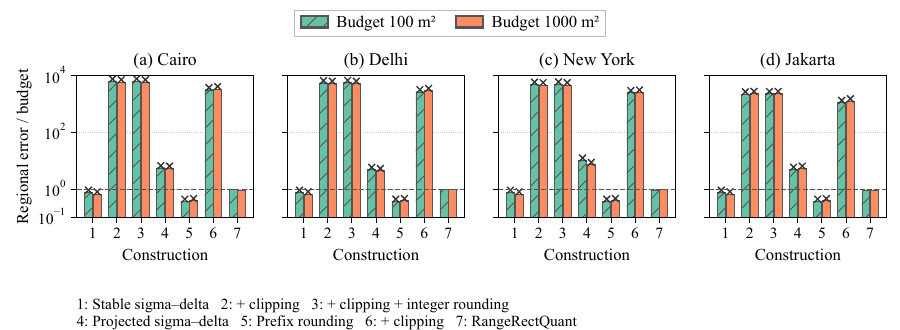}
 \caption{Regional accuracy and valid pixels. Bars give maximum rectangle error divided by budget; the dashed line marks the target. Crosses mark invalid pixels or over-budget errors. \RRQ\ meets both requirements.}
 \label{fig:closest}
\end{figure}
Regional accuracy and valid pixels must hold together (Figure~\ref{fig:closest}). Stable greedy two-dimensional sigma--delta~\cite{lyu2023sigmadelta} stores rational outputs exactly. Variants clip, integer-round, constrain its alphabet, or round prefixes with optional clipping. Stable sigma--delta controls sums with 2.22--6.76\% out-of-range pixels; clipping raises regional error to thousands of times the budget. \RRQ\ incorporates the range into its choices, delivering both properties in all eight cases.

\FloatBarrier
\SubSection{Rate across locations and budgets}
\begin{figure}[htbp]
 \centering
 \includegraphics[width=\linewidth]{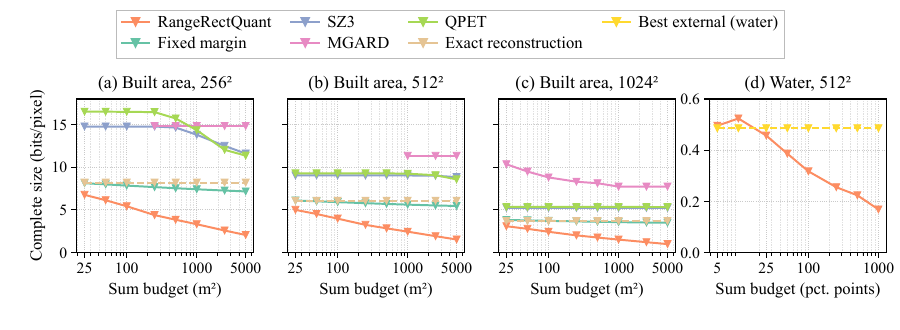}
 \caption{Self-contained rates. (a--c) Forty built-area locations at three sizes. Curves require valid files everywhere; gaps mark incomplete coverage. (d) Eight water locations; budgets use percentage-point sums.}
 \label{fig:frontier}
\end{figure}
All 960 proposed built-area files change pixels while preserving the target guarantees. Each is smaller than the smallest valid lossless, scientific-compressor, or fixed-margin file, with aggregate reductions of 16.23--72.17\% (Figure~\ref{fig:frontier}).

Adding valid sum-pyramid files retains forty wins at budgets 100 and 1000, with savings of 32.55\% and 41.30\% (Table~\ref{tab:rates}).

\begin{table}[t]
 \centering
 \caption{File rates in bits per pixel, including masked positions. Reductions compare summed bytes with the smallest valid external files, including $512^2$ sum-pyramid controls. Budgets use m$^2$ for built area and percentage-point sums for water.}
 \label{tab:rates}
 \begin{tabular}{lrrrrr}
 \toprule
 Quantity and size & $\varepsilon$ & External & \RRQ & Reduction & Wins\\
 \midrule
 Built area, $256^2$ & 100 & 7.818 & 5.421 & 30.65\% & 40/40\\
 & 1000 & 7.398 & 3.311 & 55.24\% & 40/40\\
 Built area, $512^2$ & 100 & 5.878 & 3.965 & 32.55\% & 40/40\\
 & 1000 & 4.153 & 2.438 & 41.30\% & 40/40\\
 Built area, $1024^2$ & 100 & 3.646 & 2.424 & 33.50\% & 40/40\\
 & 1000 & 3.535 & 1.506 & 57.39\% & 40/40\\
 Water, $512^2$ & 100 & 0.486 & 0.317 & 34.66\% & 7/8\\
 & 1000 & 0.486 & 0.168 & 65.35\% & 8/8\\
 \bottomrule
 \end{tabular}
\end{table}
Water savings begin at budget 25 and reach 65.35\% at 1000, with all eight locations winning from 250. Native-byte lossless files are smaller at 5 and 10.

\FloatBarrier
\SubSection{Rate decomposition}
\begin{figure}[tbp]
 \centering
 \includegraphics[width=\linewidth]{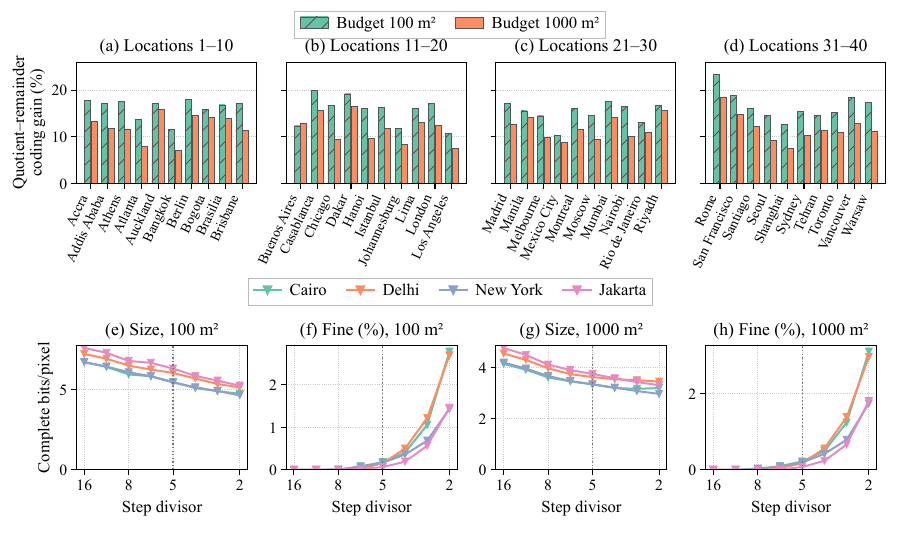}
 \caption{Storage and precision. (a--d) Quotient--remainder file-size reductions versus direct coding of identical $512^2$ fields at forty alphabetically ordered locations. (e--h) Four separate cities at eight step divisors; $q=\max(1,\lfloor\varepsilon/\text{divisor}\rfloor)$. Dotted lines mark divisor five.}
 \label{fig:representation}
\end{figure}
Quotient--remainder coding saves 15.20\% and 11.03\% over direct 16-bit LZMA of identical fields at budgets 100 and 1000, also beating byte-shuffled coding (Figure~\ref{fig:representation}). Remainders cost 0.070 and 0.086 bits per pixel; removing them exceeds the budget in all 320 fields.

Step sweeps on the four configuration locations and four separate cities preserve all constraints, with fine choices matching~\eqref{eq:fine-count}. An eight-step probe on the four cities uses common metadata and~\eqref{eq:step-selection}, saving a further 15.04\% and 8.30\% at budgets 100 and 1000. Search averages 1.20\,s per field. Main rate comparisons retain the fixed step.

\SubSection{Protected samples and scalability}
Forty configurations with nested protected sets of up to 10\% retain exact samples, pixel range, and regional accuracy. Warm construction with verification takes 22--25\,ms.

Native $4096^2$ London fields run on a 16\,GB Apple M4. Three warmed, single-thread repetitions at budget 100 give medians of 7.84\,s encoding and 1.14\,s decoding. Construction alone reaches 18.39 million pixels per second.

%% file: sections/conclusion.tex
\FloatBarrier
\Needspace{6\baselineskip}
\Section{Conclusion}
Uniform independent pixel bounds can force exact reconstruction. Exact completion intervals make coordinated quantization linear, preserving the pixel range and accuracy of every rectangular sum. Built-surface and water-occurrence studies demonstrate file savings. \RRQ\ shows how cumulative query structure can guide compression directly: represent future measurements through compact states, characterize which states remain realizable, and quantize within them.

The repository provides code, data, core experiments, and extended figures:\newline
\url{https://github.com/Marchematics/RangeRectQuant}.